\newcommand\thefontsize{The current font size is: \f@size pt}
\newtheoremstyle{mystyle}
{6pt} %
{\topsep} %
{} %
{} %
{\bfseries} %
{.} %
{.5em} %
{} %
\theoremstyle{definition}
\theoremstyle{plain}
\DeclareMathOperator*{\sign}{sign}
\newcommand{\clamp}{\mathrm{clamp}}
\newcommand{\econst}{\mathrm{e}}
\renewcommand{\phi}{\varphi}
\newcommand{\R}{\mathbb{R}}
\newcommand{\abs}[1]{\vert {#1} \vert}
\newcommand{\norm}[1]{\Vert {#1} \Vert}
\newcommand{\lnorm}[1]{\left\Vert {#1} \right\Vert}
\newcommand{\diff}{\mathrm{d}}
\newcommand{\idiff}{\,\diff}
\newcommand{\pypx}[2]{\frac{\partial{#1}}{\partial{#2}}}
\title{Learning compositional functions via\\multiplicative weight updates}
\author{
Jeremy Bernstein \\
Caltech \\ [1em]
\textbf{Ming-Yu Liu} \\
NVIDIA \And
Jiawei Zhao \\
Caltech \\ [1em]
\textbf{Anima Anandkumar} \\
Caltech / NVIDIA \And
Markus Meister \\
Caltech \\ [1em]
\textbf{Yisong Yue} \\
Caltech
}
\begin{document}

\maketitle

\vspace{-.8em}
\begin{abstract} 

Compositionality is a basic structural feature of both biological and artificial neural networks. Learning compositional functions via gradient descent incurs well known problems like vanishing and exploding gradients, making careful learning rate tuning essential for real-world applications. This paper proves that multiplicative weight updates satisfy a descent lemma tailored to compositional functions. Based on this lemma, we derive \textit{Madam}---a multiplicative version of the Adam optimiser---and show that it can train state of the art neural network architectures without learning rate tuning. We further show that Madam is easily adapted to train natively compressed neural networks by representing their weights in a logarithmic number system. We conclude by drawing connections between multiplicative weight updates and recent findings about synapses in biology.

\end{abstract}
\section{Introduction}

Neural computation in living systems emerges from the collective behaviour of large numbers of low precision and potentially faulty processing elements. This is a far cry from the precision and reliability of digital electronics. Looking at the numbers, a synapse on a computer is often represented using 32 bits taking more than 4 billion distinct values. In contrast, a biological synapse is estimated to take 26 distinguishable strengths requiring only 5 bits to store \citep{nanoconnectome}. This is a discrepancy between nature and engineering that spans many orders of magnitude. So why does the brain learn stably whereas deep learning is notoriously finicky and sensitive to myriad hyperparameters?

Meanwhile, an industrial effort is underway to scale artificial networks \textit{up} to run on supercomputers and \textit{down} to run on resource-limited edge devices. While learning algorithms designed to run natively on low precision hardware could lead to smaller and more power efficient chips \citep{hammerstrom, horowitz}, progress is hampered by our poor understanding of how precision impacts learning. As such, existing numerical representations have developed somewhat independently from learning algorithms. The next generation of neural hardware could benefit from more principled algorithmic co-design \citep{sze}.

\textbf{Our contributons:}
\begin{enumerate}
    \item Building on recent results in the perturbation analysis of compositional functions \citep{fromage2020}, we show that a multiplicative learning rule satisfies a descent lemma tailored to neural networks.
    \item We propose and benchmark \emph{Madam}---a multiplicative version of the Adam optimiser. Empirically, Madam seems to not require learning rate tuning. Further, it may be used to train neural networks with low bit width synapses stored in a logarithmic number system.
    \item We point out that multiplicative weight updates respect certain aspects of neuroanatomy. First, synapses are exclusively \textit{excitatory} or \textit{inhibitory} since their sign is preserved under the update. Second, multiplicative weight updates are most naturally implemented in a logarithmic number system, in line with anatomical findings about biological synapses \citep{nanoconnectome}.
\end{enumerate}

\section{Related work}
\paragraph{Multiplicative weight updates}
Multiplicative algorithms have a storied history in computer science. Examples in machine learning include the Winnow algorithm \citep{winnow} and the exponentiated gradient algorithm \citep{manfred}---both for learning linear classifiers in the face of irrelevant input features. The Hedge algorithm \citep{adaboost}, which underpins the AdaBoost framework for boosting weak learners, is also multiplicative. In algorithmic game theory, multiplicative weight updates may be used to solve two-player zero sum games \citep{games}. \citet{Arora2012TheMW} survey many more applications.

Multiplicative updates are typically viewed as appropriate for problems where the geometry of the optimisation domain is described by the relative entropy \citep{manfred}, as is often the case when optimising over probability distributions. Since the relative entropy is a Bregman divergence, the algorithm may then be studied under the framework of mirror descent \citep{tropp}. We suggest that multiplicative updates may arise under a broader principle: when the geometry of the optimisation domain is described by any relative distance measure.

\paragraph{Deep network optimisation} Finding the right distance measure to describe deep neural networks is an ongoing research problem. Some theoretical work has supposed that the underlying geometry is Euclidean via the assumption of Lipschitz-continuous gradients \citep{bottou}, but \citet{Zhang2020Why} suggest that this assumption may be too strong and consider relaxing it. Similarly, \citet{azizan2019stochastic} consider more general Bregman divergences under the mirror descent framework, although these distance measures are not tailored to compositional functions like neural networks. \citet{pathsgd}, on the other hand, derive a distance measure based on paths through a neural network in order to better capture the scaling symmetries over layers. Still, it is difficult to place this distance within a formal optimisation theory in order to derive descent lemmas. %

Recent research has looked at learning algorithms that make relative updates to network layers, such as LARS \citep{You:EECS-2017-156}, LAMB \citep{You2020Large}, and Fromage \citep{fromage2020}. Empirically, these algorithms appear to stabilise large batch network training and require little to no learning rate tuning. \citet{fromage2020} suggested that these algorithms are accounting for the compositional structure of the neural network function class, and derived a new distance measure called \emph{deep relative trust} to describe this analytically. It is the relative nature of deep relative trust that leads us to propose multiplicative updates in this work.

\paragraph{Numerics of a synapse} A basic goal of theoretical neuroscience is to connect the numerical properties of a synapse with network function. For example, following the observation that synapses are exclusively excitatory or inhibitory, \citet{van_Vreeswijk1724} studied how the balance of excitation and inhibition can affect network dynamics and \citet{Amit_1989} studied perceptron learning with sign-constrained synapses. More recently, based on the observation that synapse size and strength are correlated, \citet{nanoconnectome} used the number of distinguishable synapse sizes to estimate the information content of a synapse. Their results suggest that biological synapses may occupy just 26 levels in a logarithmic number system, thus storing less than 5 bits of information. This leads to one estimate that a human brain may store no more than:
$$5 \text{ bits per synapse} \times 100 \text{ trillion synapses} \approx 60 \text{ terabytes of data.}$$

\paragraph{Low-precision hardware} In their bid to outrun the end of Moore's Law, chip designers have also taken an interest in understanding and improving the efficiency of artificial synapses. This work dates back at least to the 1980s and 1990s---for example, \citet{iwata} designed a 24-bit neural network accelerator while \citet{hammerstrom} suggested that learning may break down below 12 bits per synapse. In 1993, \citet{holt} analysed round-off error for compounding operators and proposed a heuristic connection between numerics and optimisation (their Equation 54).

Last decade there was renewed interest in low precision synaptic weights both for deployment \citep{binaryconnect,hubara,Zhou2017IncrementalNQ} and training \citep{fp16,rounding,hpf8,8bit,wage} of artificial networks. This research has included the exploration of logarithmic number systems \citep{lognet,otherlog}. A general trend has emerged: a trained network may be quantised to just a few bits per synapse, but 8 to 16 bits are typically required for stable learning \citep{fp16,8bit,hpf8}. Given the lack of theoretical understanding of how precision relates to learning, these works often introduce subtle but significant complexities. For example, existing works using logarithmic number systems combine them with additive optimisation algorithms like Adam and SGD \citep{lognet}, thus requiring tuning of both the learning algorithm \emph{and} the numerical representation. And many works must resort to using high precision weights in the final layer of the network to maintain accuracy \citep{hpf8,8bit,wage}.
\section{Mathematical model}
\label{sec:theory}

A basic question in the theory of neural networks is as follows:
\begin{quote}
    How far can we perturb the synapses before we damage the network as a whole?
\end{quote}
In this paper, this question is important on two fronts: our learning rule must not destroy the information contained in the synapses, and our numerical representation must be precise enough to encode non-destructive perturbations. Once it has been established that multiplicative updates are a good learning rule (addressing the first point) it becomes natural to represent them using a logarithmic number system (addressing the second). Therefore, this section shall focus on establishing---first as a sketch, then rigorously---the benefits of multiplicative updates for learning compositional functions.

\subsection{Sketch of the main idea}

The \textit{raison d'\^{e}tre} of a synaptic weight is to support learning in the network as a whole. In machine learning, this is formalised by constructing a loss function $\mathcal{L}(W)$ that measures the error of the network in weight configuration $W$. Learning proceeds by perturbing the synapses $W\rightarrow W+\Delta W$ in order to reduce the loss function. A good perturbation direction is the negative gradient of the loss: $\Delta W \propto -\nabla_W \mathcal{L}(W)$. But how far can this direction be trusted?

Intuitively, the negative gradient should only be trusted until its approximation quality breaks down. This breakdown could be measured by the Hessian of the loss function, but this is intractable for large networks since it involves \emph{all pairs} of weights. Instead, \citet{fromage2020} suggest how to operate without the Hessian. To get a handle on exactly how this is done, consider an $L$ layer multilayer perceptron $f(x)$ and the gradient $g_k(W)$ of its loss with respect to the weights at the $k$th layer $W_k$:
\begin{align}\label{eq:backprop}
    g_k(W) := \nabla_{W_k} \mathcal{L}(W) = \pypx{\mathcal{L}}{f}\cdot\pypx{f}{h_k}\cdot\pypx{h_k}{W_k},
\end{align}
where $h_k$ denotes the activations at the $k$th hidden layer of the network. By the backpropagation algorithm \citep{backprop}---and ignoring the nonlinearity for the sake of this sketch---the second term in Equation~\ref{eq:backprop} depends on the product of weight matrices over layers $k+1$ to $L$, and the third term depends on the product of weight matrices over layers 1 to $k-1$. It is therefore natural to model the relative change in the whole expression via the formula for the relative change of a product:
$$\frac{\lnorm{g_k(W+\Delta W) - g_k(W)}_F}{\lnorm{g_k(W)}_F} 
\sim \frac{\lnorm{\prod_{l=1}^L (W_l+\Delta W_l)-\prod_{l=1}^L W_l}_F}{\lnorm{\prod_{l=1}^L W_l}_F}
\sim \prod_{l=1}^L \left(1 + \frac{\norm{\Delta W_l}_F}{\norm{W_l}_F}\right) -1.$$
This neglects the specific choice of loss function which enters via the first term $\partial \mathcal{L} / \partial f$ in Equation \ref{eq:backprop}.

To recap, we have proposed that neural networks ought to be trained by following the gradient of their loss until that gradient breaks down. We then sketched that the relative breakdown in gradient depends on a product over relative perturbations to each network layer. The simplest perturbation that follows the gradient direction \textit{and} keeps the layerwise relative perturbation small was first introduced by \citet{You:EECS-2017-156}. Letting $\eta$ be a positive scalar known as the \textit{learning rate}, the update is given by:
$$W_k \rightarrow W_k - \eta \,\frac{\norm{W_k}_F}{\norm{g_k}_F}g_k \quad \text{ for each layer } k=1,...,L.$$
The downside of this rule is that it requires knowing precisely which weights act together as a layer and normalising those updates jointly. It is difficult to imagine this happening in the brain, and for exotic artificial networks it is sometimes unclear what constitutes a layer \citep{densenet}. A convenient way to sidestep this issue is to update each individual weight $w$ multiplicatively, via:
$$w \rightarrow w (1\pm\eta) \approx w \econst^{\pm\eta} \quad \text{ for each weight } w.$$
This update ensures that $\norm{\Delta W_*}_F/ \norm{W_*}_F$ is small \textit{for every subset} $W_*$ of the weights $W$ whilst only using information local to a synapse. Therefore, by appropriate choice of the signs of the multiplicative factors, it can be arranged that this perturbation is roughly aligned with the negative gradient whilst keeping the relative breakdown in gradient small.

On the next page, we shall develop this sketch into a rigorous optimisation theory.

\subsection{First-order optimisation of continuously differentiable functions}

To elucidate the connection between gradient breakdown and optimisation, consider the following inequality. Though it applies to all continuously differentiable functions, we will think of $\mathcal{L}$ as a loss function measuring the performance of a neural network of depth $L$ at some task.

\begin{restatable}{lemma}{ftc}\label{lem:ftc}
    Consider a continuously differentiable function $\mathcal{L}:\R^n\rightarrow \R$ that maps $W \mapsto \mathcal{L}(W)$. Suppose that parameter vector $W$ decomposes into $L$ parameter groups: $W = (W_1,W_2,...,W_L)$, and consider making a perturbation $\Delta W = (\Delta W_1,\Delta W_2,...,\Delta W_L)$. Let $\theta_k$ measure the angle between $\Delta W_k$ and negative gradient $-g_k(W):=-\nabla_{W_k}\mathcal{L}(W)$. Then:
    \begin{gather*}\mathcal{L}(W+\Delta W) - \mathcal{L}(W)\leq - \sum_{k=1}^L \norm{g_k(W)}_F \norm{\Delta W_k}_F\left[\cos \theta_k - \max_{t\in[0,1]}\frac{\norm{g_k(W+t\Delta W)-g_k(W)}_F}{\norm{g_k(W)}_F}\right].\end{gather*}
\end{restatable}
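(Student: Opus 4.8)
The plan is to integrate the gradient of $\mathcal{L}$ along the straight-line segment from $W$ to $W+\Delta W$, and then bound the integrand by comparing the gradient along that path to the gradient at the basepoint. First, since $\mathcal{L}$ is continuously differentiable, the scalar map $t\mapsto\mathcal{L}(W+t\Delta W)$ is $C^1$ on $[0,1]$, so the fundamental theorem of calculus together with the chain rule gives
$$\mathcal{L}(W+\Delta W)-\mathcal{L}(W)=\int_0^1\ip{\nabla\mathcal{L}(W+t\Delta W)}{\Delta W}\idiff t=\sum_{k=1}^L\int_0^1\ip{g_k(W+t\Delta W)}{\Delta W_k}\idiff t,$$
where each $\ip{\cdot}{\cdot}$ is the Frobenius inner product on the relevant parameter group and the final sum merely regroups the global inner product block by block.

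Second, I would split each integrand by writing $g_k(W+t\Delta W)=g_k(W)+\big(g_k(W+t\Delta W)-g_k(W)\big)$. The stationary piece integrates trivially: $\int_0^1\ip{g_k(W)}{\Delta W_k}\idiff t=\ip{g_k(W)}{\Delta W_k}=-\norm{g_k(W)}_F\norm{\Delta W_k}_F\cos\theta_k$, using the definition of $\theta_k$ as the angle between $\Delta W_k$ and $-g_k(W)$. For the remaining drift piece, Cauchy--Schwarz in the Frobenius inner product gives $\ip{g_k(W+t\Delta W)-g_k(W)}{\Delta W_k}\leq\norm{g_k(W+t\Delta W)-g_k(W)}_F\,\norm{\Delta W_k}_F$; multiplying and dividing by $\norm{g_k(W)}_F$ and replacing the $t$-dependent quotient by its maximum over $[0,1]$ bounds this by $\norm{g_k(W)}_F\,\norm{\Delta W_k}_F\max_{t\in[0,1]}\tfrac{\norm{g_k(W+t\Delta W)-g_k(W)}_F}{\norm{g_k(W)}_F}$, which is constant in $t$. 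Integrating this constant over the unit-length interval $[0,1]$ and summing over $k$ assembles exactly the claimed bracketed expression.

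The argument is essentially the fundamental theorem of calculus plus Cauchy--Schwarz, so there is no deep obstacle; the only points requiring care are bookkeeping details. These are: the sign/angle convention, so that $\ip{g_k(W)}{\Delta W_k}=-\norm{g_k(W)}_F\norm{\Delta W_k}_F\cos\theta_k$ with the correct minus sign; the degenerate case $\norm{g_k(W)}_F=0$, where the $k$th summand should be read as $0$ and indeed both $\ip{g_k(W)}{\Delta W_k}$ and the drift contribution vanish identically; and the observation that continuity of $\nabla\mathcal{L}$ is precisely what makes the integrand continuous, which legitimises both the FTC step and the attainment of the maximum over the compact interval $[0,1]$.
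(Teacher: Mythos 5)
Your proof is correct and follows essentially the same route as the paper's: the fundamental theorem of calculus along the segment, splitting the integrand into the stationary gradient plus the drift, the cosine formula for the first piece, and Cauchy--Schwarz with a maximum over $t\in[0,1]$ (the paper's ``integral estimation lemma'') for the second. Your added remarks on the degenerate case $\norm{g_k(W)}_F=0$ and on continuity justifying the attainment of the maximum are sound bookkeeping that the paper leaves implicit.
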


The proof is in Appendix \ref{app:proof}. The result says: \textit{to reduce a function, follow its negative gradient until it breaks down}. Descent is formally guaranteed when the bracketed terms are positive. That is, when:
\begin{equation}\label{eq:descent}
    \max_{t\in[0,1]}\frac{\norm{g_k(W+t\Delta W)-g_k(W)}_F}{\norm{g(W)}_F}<\cos\theta_k \qquad (\text{for }k=1,...,L).
\end{equation}

According to Equation~\ref{eq:descent}, to guarantee descent for neural networks, we must bound their relative breakdown in gradient. To this end, \citet{fromage2020} propose the notion of \emph{deep relative trust}.

\begin{restatable}{assumption}{drt}[Deep relative trust]\label{ass:drt}
Consider a neural network with $L$ layers and parameters $W = (W_1,W_2,...,W_L)$. Consider parameter perturbation $\Delta W = (\Delta W_1,\Delta W_2,...,\Delta W_L)$. Let $g_k(W):=\nabla_{W_k}\mathcal{L}(W)$ denote the gradient of the loss. Then the gradient breakdown is bounded by:
$$\frac{\norm{g_k(W+\Delta W)-g_k(W)}_F}{\norm{g_k(W)}_F} \leq \prod_{l=1}^L \left(1+\frac{\norm{\Delta W_l}_F}{\norm{W_l}_F}\right) - 1\qquad (\text{for }k=1,...,L).$$
\end{restatable}

While deep relative trust is based on a perturbation analysis of multilayer perceptrons with leaky relu nonlinearity, it may be seen as a model of more general neural networks since the product reflects their compositional structure. Crucially, compared to a Hessian that may contain as many as $10^{18}$ entries for modern networks (since the Hessian's size is quadratic in the number of parameters), deep relative trust gives a tractable analytic model for the breakdown in gradient.

\subsection{Descent via multiplicative weight updates}

Since deep relative trust penalises the relative size of the perturbation to each layer, it is natural that our learning algorithm would bound these perturbations on a relative scale. A simple way to achieve this is via the following \textit{multiplicative} update rule:
\begin{equation}\label{eq:mult}
    W \rightarrow W+\Delta W = W \odot \big[1-\eta\,\sign W \odot\,\sign g(W)\big],
\end{equation}
where the sign is taken elementwise and $\odot$ denotes elementwise multiplication. Synapses shrink where the signs of $W$ and $g(W)$ agree and grow where the signs differ. In the following theorem, we establish descent under this update for compositional functions described by deep relative trust.

\begin{restatable}{theorem}{descentlemma}\label{thm:relative}
    Let $\mathcal{L}$ be the continuously differentiable loss function of a neural network of depth $L$ that obeys deep relative trust. For $k=1,...,L$, let $0\leq\gamma_k\leq\tfrac{\pi}{2}$ denote the angle between $\abs{g_k(W)}$ and $\abs{W_k}$ (where $\abs{\cdot}$ denotes the elementwise absolute value). Then the multiplicative update in Equation \ref{eq:mult} will decrease the loss function provided that:
    $$\eta < \left(1 + \cos \gamma_k \right)^\frac{1}{L} -1 \qquad (\text{for all }k=1,...,L).$$
\end{restatable}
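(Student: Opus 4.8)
The plan is to evaluate each ingredient of Lemma~\ref{lem:ftc} for the specific perturbation $\Delta W$ produced by the multiplicative rule~\eqref{eq:mult}, and then read off the condition under which every bracketed term is positive. Since $W\odot\sign W = \abs{W}$ elementwise, the update direction is $\Delta W_k = -\eta\,\abs{W_k}\odot\sign g_k(W)$, and two facts follow immediately. First,
\[
\norm{\Delta W_k}_F = \eta\,\norm{\abs{W_k}\odot\sign g_k(W)}_F = \eta\,\norm{W_k}_F
\]
(taking $\sign 0 = 0$, coordinates where $g_k$ vanishes only turn this into an inequality $\leq$, which strengthens the conclusion). Second, because $g_k(W)\odot\sign g_k(W) = \abs{g_k(W)}$,
\[
\ip{\Delta W_k}{-g_k(W)} = \eta\,\ip{\abs{W_k}\odot\sign g_k(W)}{g_k(W)} = \eta\,\ip{\abs{W_k}}{\abs{g_k(W)}},
\]
so dividing by $\norm{\Delta W_k}_F\,\norm{g_k(W)}_F = \eta\,\norm{W_k}_F\,\norm{g_k(W)}_F$ shows that the angle $\theta_k$ between $\Delta W_k$ and $-g_k(W)$ satisfies $\cos\theta_k = \cos\gamma_k$, the angle between the entrywise-nonnegative vectors $\abs{W_k}$ and $\abs{g_k(W)}$ (nonnegativity gives $\cos\gamma_k\geq 0$, consistent with $\gamma_k\in[0,\tfrac{\pi}{2}]$).

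Next I would bound the relative gradient breakdown along the segment $t\mapsto W + t\Delta W$. For $t\in[0,1]$ the scaled perturbation $t\Delta W$ has layerwise relative size $\norm{t\Delta W_l}_F/\norm{W_l}_F = t\eta$ \emph{uniformly} in $l$, so deep relative trust (Assumption~\ref{ass:drt}) gives
\[
\frac{\norm{g_k(W+t\Delta W)-g_k(W)}_F}{\norm{g_k(W)}_F} \leq \prod_{l=1}^L(1+t\eta) - 1 = (1+t\eta)^L - 1 ,
\]
and since the right-hand side increases in $t$, the maximum over $t\in[0,1]$ is at most $(1+\eta)^L - 1$.

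Substituting both computations into Lemma~\ref{lem:ftc} yields
\[
\mathcal{L}(W+\Delta W) - \mathcal{L}(W) \leq -\sum_{k=1}^L \norm{g_k(W)}_F\,\norm{\Delta W_k}_F\Big[\cos\gamma_k - \big((1+\eta)^L - 1\big)\Big].
\]
Each bracket is positive precisely when $1 + \cos\gamma_k > (1+\eta)^L$, equivalently $\eta < (1+\cos\gamma_k)^{1/L} - 1$; imposing this for every $k$ makes the whole sum nonpositive, hence $\mathcal{L}$ does not increase, and it strictly decreases as soon as some layer has $g_k(W)\neq 0$ and $W_k\neq 0$ (otherwise $\Delta W = 0$ and there is nothing to prove). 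This is exactly the claimed condition.

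The argument is in essence a single substitution, so the ``main obstacle'' is really just recognising the two structural identities that make it work: the multiplicative rule forces $\norm{\Delta W_k}_F/\norm{W_k}_F = \eta$ simultaneously across \emph{all} layers (so the product in deep relative trust collapses to $(1+\eta)^L$), and it aligns the update with $-g_k(W)$ at the specific angle $\gamma_k$ between $\abs{W_k}$ and $\abs{g_k(W)}$. The only point needing care is the treatment of coordinates where $W$ or $g(W)$ vanishes --- the sign convention above, and whether $\gamma_k$ is well defined --- but these degenerate cases do not affect the inequality, so I would dispose of them with a short remark rather than belabour them.
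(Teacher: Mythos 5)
Your proposal is correct and follows essentially the same route as the paper's own proof: compute that the multiplicative rule gives $\norm{\Delta W_l}_F/\norm{W_l}_F = \eta$ uniformly over layers so the deep relative trust bound collapses to $(1+\eta)^L - 1$, identify $\theta_k$ with $\gamma_k$, and substitute into Lemma~\ref{lem:ftc}. Your explicit inner-product computation and the remark on vanishing coordinates are slightly more careful than the paper's version, but the argument is the same.
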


Theorem \ref{thm:relative} tells us that for small enough $\eta$, multiplicative updates achieve descent. It also tells us \textit{on what scale} $\eta$ must be small. The proof is given in Appendix \ref{app:proof}.

We can bring Theorem \ref{thm:relative} to life by plugging in numbers. First, we must consider what values the angles $\gamma_k$ are likely to take. Since $\abs{g_k(W)}$ and $\abs{W_k}$ are nonnegative vectors, the angle between them can be no larger than $\pi/2$, which happens when the support of the two vectors is disjoint. A simple model to consider is two high-dimensional Gaussian vectors with iid components, for which the angle between their absolute values satisfies $\cos \gamma_k\approx 0.64$. Taking this value, Theorem \ref{thm:relative} implies that for a 40 layer network, setting $\eta=0.01$ guarantees descent. We find that $\eta=0.01$ works well in all our experiments with the \textit{Madam} optimiser in later sections.

\newpage
\begin{algorithm}[H]
\caption{
    \textit{Madam}---a \underline{m}ultiplicative \underline{ada}ptive \underline{m}oments based optimiser. Good default hyperparameters are: $\eta=0.01$, $\eta^*=8\eta$, $\sigma^*=3\sigma$, $\beta=0.999$. $\sigma$ can be lifted from a standard initialisation.
}
\label{alg:madam}
  
\algnewcommand\algorithmicinput{\textbf{Initialise:}}
\algnewcommand\Initialise{\item[\algorithmicinput]}

\begin{algorithmic}
\item[\textbf{Numerical representation:}]
    initial weight scale $\sigma$; 
    max weight $\sigma^*$.
\item[\textbf{Optimisation parameters:}]
    typical perturbation $\eta$;
    max perturbation $\eta^*$;
    averaging constant $\beta$.
\item[\textbf{Weight initialisation:}] initialise weights randomly on scale $\sigma$, for example: $W\sim\textsc{Normal}(0,\sigma)$.
\State $\bar{g} \gets 0$
                \Comment{initialise second moment estimate}
\Repeat
    \State $g\gets \mathrm{StochasticGradient()}$ 
                \Comment{collect gradient}
    \State $\bar{g}^2 \gets (1-\beta) g^2 + \beta \bar{g}^2$                        \Comment{update second moment estimate}
    \State $W \gets W \odot \exp [-\eta \, \sign W \odot \clamp_{\eta^*/\eta}(g/\bar{g})]$
                \Comment{update weights multiplicatively}
    \State $W \gets \clamp_{\sigma^*} (W)$
                \Comment{clamp weights between $\pm \sigma^*$}
\Until{converged}
\end{algorithmic}
\end{algorithm}

\section{Making the algorithm practical---the \textit{Madam} optimiser}\label{sec:madam}

In the previous section, we built an optimisation theory for the multiplicative update rule appearing in Equation \ref{eq:mult}. While that update yields a straightforward mathematical analysis, two modifications render it more practically useful. First, we use the fact that $1+x \approx \exp x$ for small $x$ to approximate Equation \ref{eq:mult} by:
\begin{equation}\label{eq:exp}
    W+\Delta W = W \odot \exp \big[-\eta\,\sign W \odot\,\sign g\big],
\end{equation}
where $g$ is shorthand for $g(W)$.
This change makes it easier to represent weights in a \emph{logarithmic number system}, since the weights are restricted to integer multiples of $\eta$ in log space. Second, in practice it may be overly stringent to restrict to the 1 bit gradient sign. This leads us to propose:
\begin{equation}\label{eq:madam}
    W+\Delta W = W \odot \exp \left[-\eta\,\sign W \odot\,\clamp_{\eta^*/\eta}\left(\frac{g}{\bar{g}}\right)\right].
\end{equation}
In this expression, $\bar{g}$ denotes the root mean square gradient, which is estimated by a running average over iterations. Each iteration, this is updated by: $$\bar{g}^2 \gets (1-\beta)\,g^2 + \beta\,\bar{g}^2.$$ This idea is borrowed from the Adam and RMSprop optimisers \citep{kingma_adam:_2015, Tieleman2012}. Since $g/\bar{g}$ should typically be $O(\pm1)$ (where $O(\cdot)$ means \textit{on the order of}), it may be viewed as a higher precision version of $\sign g$. The $\mathrm{clamp}_a(\cdot)$ function projects its argument on to the interval $[-a,a]$. This gradient clamping means that a weight can change by no more than a factor of $\exp(\pm\eta \times \eta^*/\eta)=\exp(\pm\eta^*)\approx 1\pm\eta^*$ per iteration, ensuring that the algorithm still makes bounded relative perturbations and respects deep relative trust. Finally, we introduce a weight clamping operation---see the full pseudocode in Algorithm \ref{alg:madam}. As discussed in Section \ref{sec:limitations}, weight clamping \textit{stabilises} and \textit{regularises} the algorithm.

\subsection{\texorpdfstring{$\boldsymbol{B}$}{B}-bit Madam}\label{sec:b-bit-alg}

The multiplicative nature of Madam (Algorithm \ref{alg:madam}) suggests storing synapse strengths in a logarithmic number system, where numbers are represented just by a sign and exponent. To see why this is natural for Madam consider that, since $g/\bar{g}$ is typically $O(\pm1)$, Madam's typical relative perturbation to a synapse is $\exp \pm \eta$. Therefore, in log space, the synapse strengths typically change by $\pm\eta$. This suggests efficiently representing a synapse by its sign and an integer multiple of $\eta$.

In practice, a slightly more fine-grained discretisation is beneficial. We define the \textit{base precision} $\eta_0$ which divides both the learning rate $\eta$ and maximum perturbation strength $\eta^*$. We then round $g/\bar{g}$ appearing in Madam to the nearest multiple of $\eta_0/\eta$. This leads to quantised multiplicative updates:
$$\econst^{0},\quad \econst^{\pm \eta_0},\quad \econst^{\pm 2\eta_0},\quad ...,\quad \econst^{\pm(\eta^*-2\eta_0)},\quad \econst^{\pm(\eta^*-\eta_0)},\quad \econst^{\pm\eta^*}.$$
A good setting in practice is $\eta_0 = 0.001$, $\eta = 0.01$ and $\eta^*=0.08$. Finally, to obtain a $B$-bit weight representation, we must restrict the number of allowed weight levels to $2^B$. The resulting representation is:
$$W = \sign{(W)}
\,\sigma^* \econst^{-k \eta_0} \qquad \text{for } k\in \{ 0,1,...,2^{B}-1\}.$$
The scale parameter $\sigma^*$ is shared by a whole network layer, and may be taken from an existing weight initialisation scheme. A good mental picture is that the weights live on a ladder in log space with rungs spaced apart by $\eta_0$. The Madam update moves the weights up or down the rungs of the ladder.
\section{Benchmarking Madam in FP32}
\label{sec:exp-madam}

\begin{figure}
    \begin{center}
    \includegraphics[width=\linewidth]{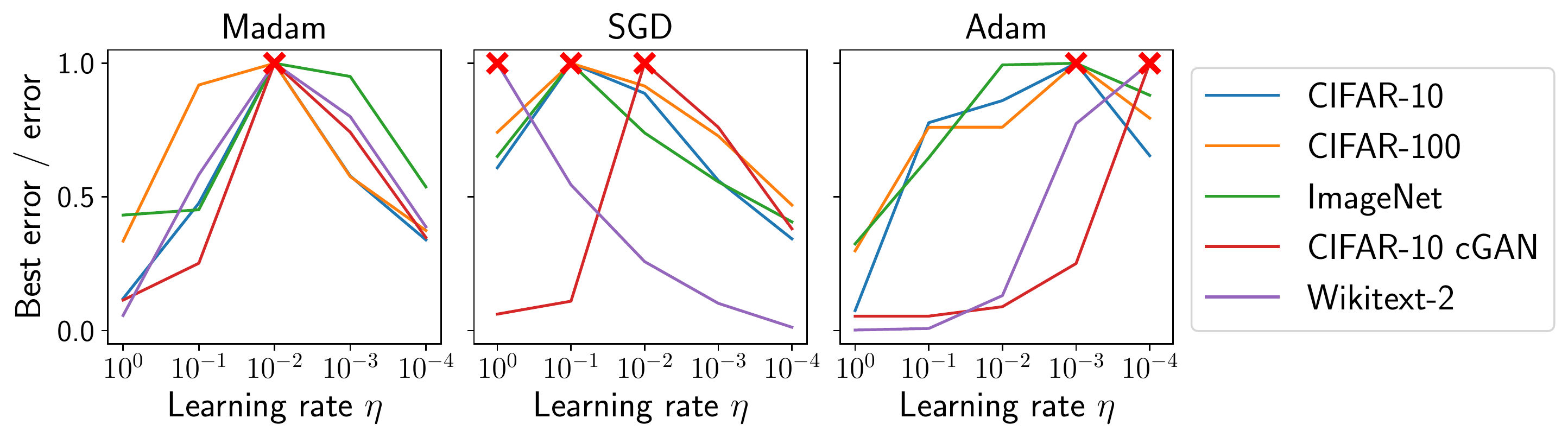}
    \vspace{-1.5em}
    \caption{Learning rate tuning. We tuned the learning rate $\eta$ over a logarithmic grid on various deep learning benchmarks. For each run, a fixed $\eta$ is used without decay. For each optimiser and setting of $\eta$, we plot the best test error across all $\eta$ for that optimiser divided by the test error for that specific $\eta$, meaning that a value of $1.0$ (highlighted with a red cross) indicates the best setting of $\eta$ for that optimiser and task. For Madam, the best setting ($\eta=0.01$) was independent of task.}\label{fig:lr_compare}
    \end{center}
\end{figure}
\begin{table}
\caption{Results after tuning the learning rate $\eta$. For each task, we compare to the better performing optimiser in $\{\text{SGD, Adam}\}$ and list the associated inital $\eta$. We quote top-1 test error, FID \citep{ttur} and perplexity for the classifiers, GAN and transformer respectively. Lower is better in all cases. The mean and range are based on three repeats. Complete results are given in Table \ref{tab:madam-results-full} in Appendix \ref{app:exp}.}\label{tab:madam-results}
    \centering
    \emph{Note: all Madam runs use initial learning rate $\mathit{\eta=0.01}$.\\
 For all algorithms, $\mathit{\eta}$ is decayed by 10 when the loss plateaus.} 

\vspace{0.5em}
    \begin{tabular}{ccccccc}
    \toprule
    \textbf{Dataset} & 
    \textbf{Task} &
    \begin{tabular}{@{}c@{}}\textbf{Best}\\\textbf{Baseline}\end{tabular} & 
    \begin{tabular}{@{}c@{}}\textbf{Baseline}\\\textbf{$\boldsymbol{\eta}$}\end{tabular} & 
    \begin{tabular}{@{}c@{}}\textbf{Baseline}\\\textbf{Result}\end{tabular} & 
    \begin{tabular}{@{}c@{}}\textbf{Madam}\\\textbf{$\boldsymbol{\eta}$}\end{tabular} & 
    \begin{tabular}{@{}c@{}}\textbf{Madam}\\\textbf{Result}\end{tabular} \\
    \midrule
    CIFAR-10  & Resnet18 & Adam  & 0.001 & $6.6\pm0.2$ & 0.01 & $7.8\pm0.2$ \\
    CIFAR-100 & Resnet18 & SGD  & 0.1 & $29.1\pm0.2$ & 0.01 & $30.2\pm0.1$ \\
    ImageNet  & Resnet50 & SGD  & 0.1 & $24.1\pm0.1$ & 0.01 & $28.9\pm0.1$ \\
    CIFAR-10  & cGAN     & Adam & 0.0001 & $23.9\pm0.9$ & 0.01 & $19 \pm 2$ \\
    Wikitext-2& Transformer & SGD    & 1.0 & $169.6\pm0.6$ & 0.01 & $173.3\pm0.6$ \\
    \bottomrule
    \end{tabular}
\end{table}

In this section, we benchmark Madam (Algorithm \ref{alg:madam}) with weights represented in 32-bit floating point. In the next section, we shall benchmark $B$-bit Madam. The results in this section show that---across various tasks, including image classification, language modeling and image generation---Madam \textit{without learning rate tuning} is competitive with a \textit{tuned} SGD or Adam.

In Figure \ref{fig:lr_compare}, we show the results of a learning rate grid search undertaken for Madam, SGD and Adam. The optimal learning rate setting for each benchmark is shown with a red cross. Notice that for Madam the optimal learning rate is in all cases $0.01$, whereas for SGD and Adam it varies.

In Table \ref{tab:madam-results}, we compare the final results using tuned learning rates for Adam and SGD and using $\eta=0.01$ for Madam. Madam's results are competitive, and substantially better in the GAN experiment where SGD obtained FID $>30$. Madam performed worst on the Imagenet experiment, achieving a test error 4.8\% worse than SGD. These results were attained using epoch budgets lifted from the baseline implementations.

\paragraph{Implementation details} The code for these experiments is to be found at \url{https://github.com/jxbz/madam}. Because bias terms are initialised to zero by default in Pytorch \citep{pytorch}, a multiplicative update would have no effect on these parameters. Therefore we intialised the biases away from zero in some experiments, which led to slight performance improvements. Madam benefited from light tuning of the $\sigma^*$ hyperparameter (see Algorithm \ref{alg:madam}) which regularises the network by controlling the maximum size of the weights. In each experiment, $\sigma^*$ was set to be 1 to 5 times larger than the initialisation scale on a layerwise basis. Tuning $\sigma^*$ had a comparable effect on final performance to the effect of tuning weight decay in SGD. More experimental details are given in Appendix \ref{app:exp}.

\section{Benchmarking \texorpdfstring{$\boldsymbol{B}$}{B}-bit Madam}
\label{sec:exp-B-bit-madam}

The results in this section demonstrate that $B$-bit Madam can be used to train networks that use 8--12 bits per weight, often with little to no loss in accuracy compared to an FP32 baseline. This compression level is in the range of 8--16 bits suggested by prior work \citep{fp16,8bit,hpf8}. However, we must emphasise the ease with which these results were attained. Just as Madam did not require learning rate tuning (see Figure \ref{fig:lr_compare}), neither did $B$-bit Madam. In all 12-bit runs, a learning rate of $\eta=0.01$ combined with a base precision $\eta_0=0.001$ could be relied upon to achieve stable learning. 

The results are given in Table \ref{tab:int_madam_results}. Though little deterioration is experienced at 12 bits, we believe that the results could be improved by making minor hyperparameter tweaks. For example, in the 12-bit ImageNet experiment we were able to reduce the error from $\sim29\%$ to $\sim25\%$ by borrowing layerwise parameter scales $\sigma^*$ (see Section \ref{sec:b-bit-alg}) from a pre-trained model instead of using the standard Pytorch \citep{pytorch} initialisation scale. Still, it would be against the spirit of this work to present results with over-tuned hyperparameters.

To get more of a feel for the relative simplicity of our approach, we shall briefly comment on some of the subtleties introduced by prior work that $B$-bit Madam avoids. Studies often maintain higher-precision copies of the weights as part of their low-precision training process \citep{wage,8bit}. For example, in their paper on 8-bit training, \citet{8bit} actually maintain a 16-bit master copy of the weights. Furthermore, it is common to keep certain network layers such as the output layer at higher precision \citep{hpf8,8bit,wage}. In contrast, we use the same bit width to represent every layer's weights, and weights are both stored and updated in their $B$-bit representation.

Furthermore, we want to emphasise how natural it is to combine multiplicative updates with a logarithmic number system. Prior research on deep network training using logarithmic number systems has combined them with additive optimisation algorithms like SGD \citep{lognet}. This necessitates tuning both the number system hyperparameters (dynamic range and base precision) \textit{and} the optimisation hyperparameter (learning rate). As was demonstrated in Figure \ref{fig:lr_compare}, tuning the SGD learning rate is already a computationally intensive task. Moreover, the cost of hyperparameter grid search grows \textit{exponentially} in the number of hyperparameters.

\begin{table}
  \caption{Benchmarking $B$-bit Madam. We tested 12-bit, 10-bit and 8-bit Madam on various tasks. The results for the FP32 baseline are reproduced from Table \ref{tab:madam-results}. For each result we give the mean and range over three repeats. In all experiments, an initial learning rate $\eta$ of $O(0.01)$ was used. In all 12-bit experiments, the base precision was chosen as $\eta_0=0.001$. In order to reduce the bit width from 12 to 8 bits, we increased the base precision $\eta_0$ of the numerical representation finding this to work better than the alternative: reducing the dynamic range of the numerical representation.}
  \centering
\begin{tabular}{cccccc}
    \toprule
    \textbf{Dataset} & 
    \textbf{Task} &
    \textbf{FP32 Madam} &
    \textbf{12-bit} &
    \textbf{10-bit} &
    \textbf{8-bit} \\
    \midrule
    CIFAR-10 & Resnet18 & $7.8\pm0.2$ & $7.0\pm0.1$ & $7.8\pm0.3$ & $8.6\pm1.5$\\
    CIFAR-100 & Resnet18 & $30.2\pm0.1$ & $27.6\pm0.3$ & $29.5\pm0.3$ & $33.9\pm1.1$ \\
    ImageNet & Resnet50 & $28.9\pm0.1$ & $31.1\pm0.1$ & $34.8\pm0.3$ & $50.5\pm0.5$ \\
    CIFAR-10 & cGAN & $19.3\pm0.7$ & $19.8\pm0.8$ & $23.4\pm0.4$ & $36\pm6$\\
    Wikitext-2 & Transformer & $173.3\pm0.6$ & $182.3\pm0.6$ & $218.0\pm0.6$ & $262\pm2$\\
    \bottomrule
  \end{tabular}
  \label{tab:int_madam_results}
\end{table}

\paragraph{Implementation details}
The code for these experiments is to be found at \url{https://github.com/jxbz/madam}. The $B$-bit Madam hyperparameters are defined in Section \ref{sec:b-bit-alg}. We choose the layerwise scale $\sigma^*$ in $B$-bit Madam following the same strategy as $\sigma^*$ in Madam: 1 to 5 times the default Pytorch \citep{pytorch} initialisation scale, except for biases where the default Pytorch initialisation is zero. We choose the initial learning rate $\eta$ to be $O(0.01)$ across all experiments. In the 12-bit experiments, we choose the base precision $\eta_0=0.001$. In the 10-bit and 8-bit experiments, a larger base precision is used---still satisfying $\eta_0\leq\eta=O(0.01)$. For each layer, we initialise the weights uniformly from:
$$\left\{ \pm\sigma^* \econst^{-k \eta_0} : k\in \{0,1,...,2^{B}-1\}\right\}.$$
Notice that $\eta_0$ sets the precision of the representation, and the dynamic range is given by $\exp{\left[(2^B - 1) \eta_0\right]}$. For $B=12$ bits and base precision $\eta_0=0.001$ as used in the 12-bit experiments, the dynamic range is $60.0$ to 1 decimal place. Finally, during training the learning rate $\eta$ is decayed toward the base precision $\eta_0$ whenever the loss plateaus. See Appendix \ref{app:exp} for more detail.
\newpage
\begin{figure}
    \centering
    \begin{minipage}{.55\textwidth}
        \centering
        \usetikzlibrary{decorations.pathreplacing}

\begin{center}
\scalebox{0.75}{
\begin{tikzpicture}
    [box/.style={rectangle,draw=black,thick, minimum size=0.6cm}]
\node(sign)[box,fill=black,text=white]  at (0.0,0){S};
\foreach \x in {1,2,...,8}{
    \node[box,fill=gray] at (\x/1.666,0){E};
}
\foreach \x in {9,10,...,15}{
    \node[box] at (\x/1.666,0){M};
}
\node[below of =  sign, yshift = 0.0
cm](signlabel){1-bit sign};
\draw[latex-] (sign) to[out=-90,in=90,looseness=1.8] (signlabel);
\draw [thick, black,decorate,decoration={brace,amplitude=10pt,mirror}](0.4,-0.4) -- (5.0,-0.4) node[black,midway,yshift=-0.6cm] {8-bit exponent};
\draw [thick, black,decorate,decoration={brace,amplitude=10pt,mirror}](5.2,-0.4) -- (9.2,-0.4) node[black,midway,yshift=-0.56cm] {7-bit mantissa};
\end{tikzpicture}
}\\[1em]

\scalebox{0.75}{
\begin{tikzpicture}
    [box/.style={rectangle,draw=black,thick, minimum size=0.6cm}]
\node(sign)[box,fill=black,text=white]  at (0.0,0){S};
\foreach \x in {1,2,...,12}{
    \node[box,fill=gray] at (\x/1.666,0){E};
}
\node[below of =  sign, yshift = 0.0
cm](signlabel){1-bit sign};
\draw[latex-] (sign) to[out=-90,in=90,looseness=1.8] (signlabel);
\draw [thick, black,decorate,decoration={brace,amplitude=10pt,mirror}](0.4,-0.4) -- (7.4,-0.4) node[black,midway,yshift=-0.6cm] {12-bit exponent};
\end{tikzpicture}
}

\end{center}
    \end{minipage}%
    \begin{minipage}{0.45\textwidth}
        \centering
        \includegraphics[width=0.8\linewidth]{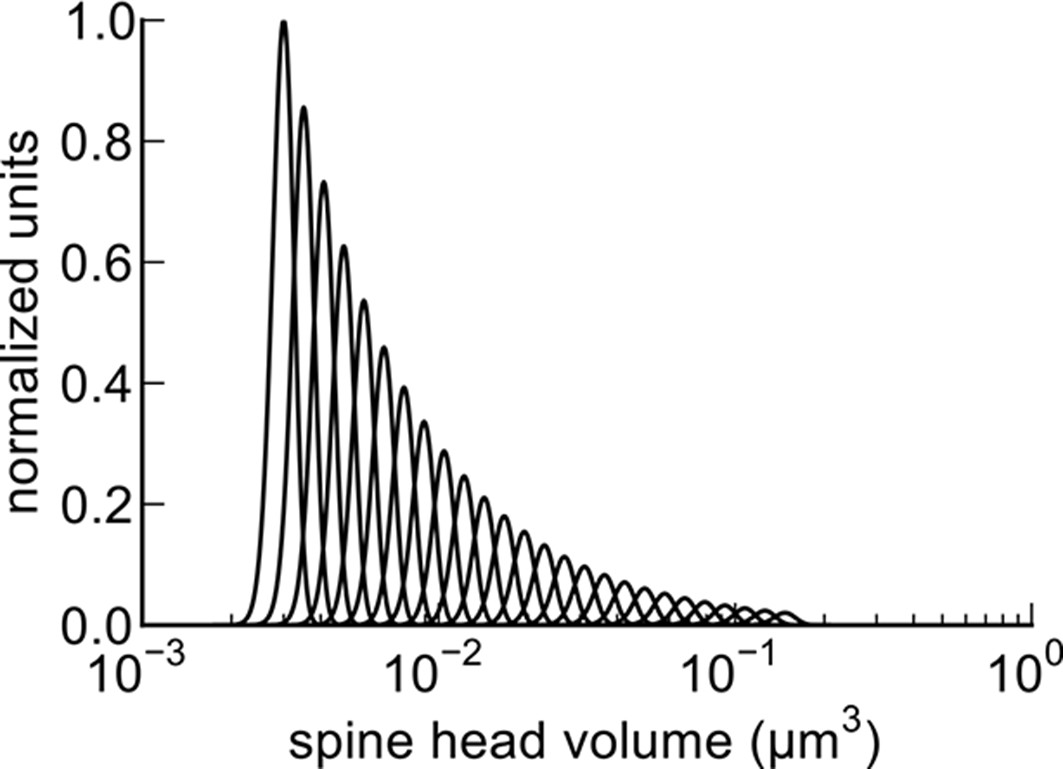}
    \end{minipage}
    \caption{Upper left: the \texttt{bfloat16} number system used in Google's TPU chips \citep{bfloat16}. Lower left: the logarithmic number system suggested by our theory. Right: the synaptic number system suggested by \citet{nanoconnectome} based on 3D electron microscope images of hippocampal neuropil in three adult male rats. Synapses are suggested to take 26 distinguishable strengths (believed to be correlated with spine head volume) on a logarithmic number system. This plot is reproduced from \citep[Figure~8]{nanoconnectome}.}
    \label{fig:number-systems}
\end{figure}

\section{Discussion}

After studying the optimisation properties of compositional functions, we have confirmed that neural networks may be trained via multiplicative updates to weights stored in a logarithmic number system. We shall now discuss possible implications for both chip design and neuroscience.

\paragraph{Computer number systems} In an effort to accelerate and reduce the cost of machine learning workflows, chip designers are currently exploring low precision arithmetic in both GPUs and TPUs. For example, Google has used \texttt{bfloat16}---or \textit{brain floating point}---in their TPUs \citep{bfloat16} and NVIDIA has developed a mixed precision GPU training system \citep{amp}. A basic question in the design of low-precision number systems is how the bits should be split between the exponent and mantissa. As shown in Figure \ref{fig:number-systems} (left), \texttt{bfloat16} opts for an 8-bit exponent and 7-bit mantissa. Our work supports a prior suggestion \citep{lognet} that to represent network weights, a mantissa may not be needed at all.

Curiously, the same observation is made by \citet{nanoconnectome} in the context of neuroscience. In Figure \ref{fig:number-systems} (right), we reproduce a plot from their paper that illustrates this. The authors suggest that the brain may use ``a form of non-uniform quantization which efficiently encodes the dynamic range of synaptic strengths at constant precision''---or in other words, a logarithmic number system. The authors found that ``spine head volumes ranged in size over a factor of 60 from smallest to largest'', where spine head volume is a correlate of synapse strength. Coincidentally, $B$-bit Madam with $B=12$ and base precision $\eta_0=0.001$ (as in our experiments) has the same dynamic range of $\exp[(2^{B}-1)\eta_0]\approx60$.

\paragraph{Frozen signs and Dale's principle} The neuroscientific principle that synapses cannot change sign is sometimes referred to as \emph{Dale's principle} \citep{Amit_1989}. When compositional functions are learnt via multiplicative updates, the signs of the weights are frozen and can be thought to satisfy Dale's principle. This means that after training with multiplicative updates, there is no need to store the sign bits since they may be regenerated from the random seed. This could also impact hardware design since it would technically be possible to freeze a random sign pattern into the microcircuitry itself.

\paragraph{Plasticity in the brain} The precise mechanisms for plasticity in the brain are still under debate. Popular models include \textit{Hebbian learning} and its variant \textit{spike time dependent plasticity}, both of which adjust synapse strengths based on local firing history. Although these rules are usually modeled via additive updates, it has been suggested that multiplicative updates may better explain the available data---both in terms of the induced stationary distribution of synapse strengths, and also time-dependent observations \citep{vanrossum,barbour,Buzski2014TheLB}. For example, \citet{Loewenstein9481} image dendritic spines in the mouse auditory cortex over several weeks. Upon finding that changes in spine size are ``proportional to the size of the
spine'', the authors suggest that multiplicative updates are at play.

In contrast to these studies that concentrate on matching candidate update rules to available data---and thus probing \emph{how} the brain learns---this paper has focused on using perturbation theory and calculus to link update rules to the stability of network function. 
This is a complementary approach that may shed light on \textit{why} the brain learns the way it does---paving the way, perhaps, for computer microarchitectures that mimic it.

\section{Limitations and future work}
\label{sec:limitations}
\begin{figure}
    \begin{center}
    \includegraphics[scale=0.5]{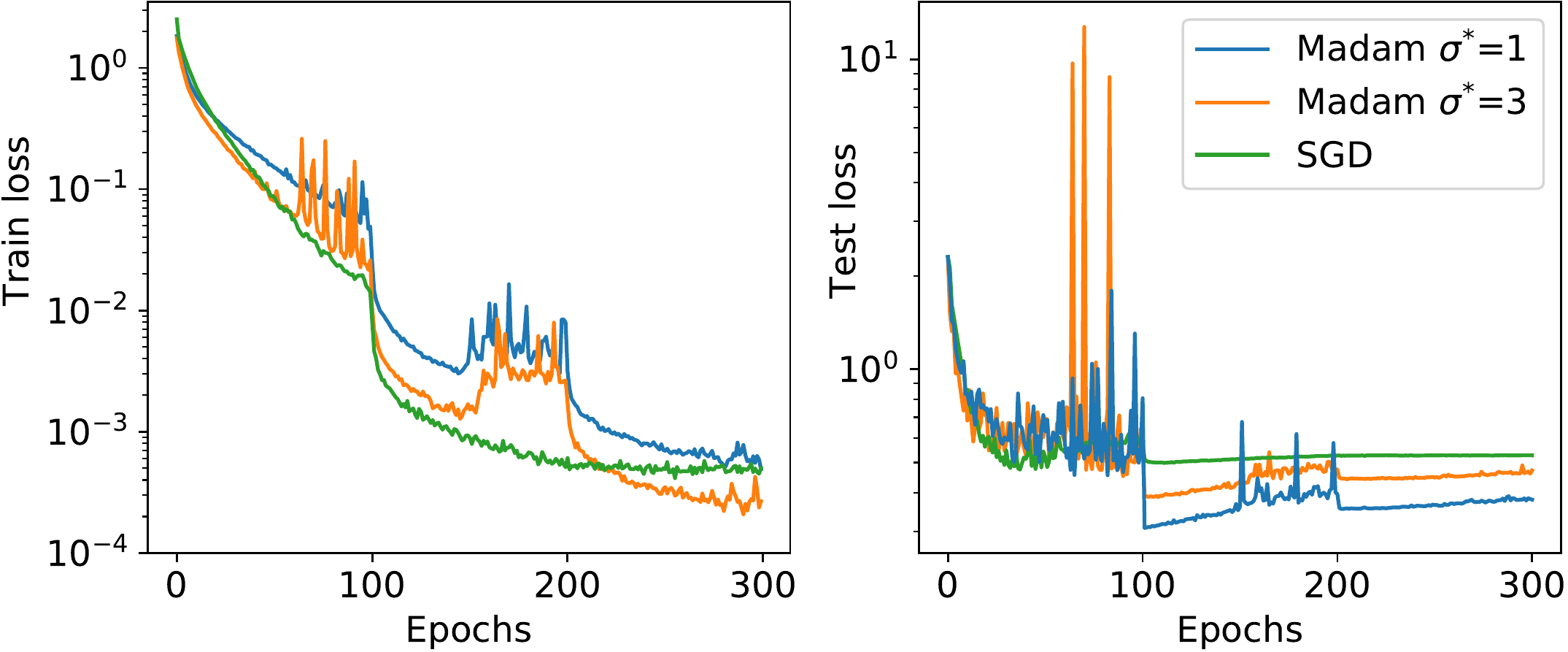}
    \caption{Convergence speed and effect of weight clipping. We plot the convergence curves of SGD and Madam on CIFAR-10. Two different weight clipping thresholds $\sigma^*$ are shown for Madam. Reducing the clipping threshold $\sigma^*$ can be seen to both stabilise and regularise the test performance.
    }\label{fig:loss_compare}
    \end{center}
\end{figure}

There are two aspects of this work that we would like to understand better. The first is the role of the weight clipping threshold $\sigma^*$ (see Algorithm \ref{alg:madam}). Figure \ref{fig:loss_compare} shows a comparison between an SGD baseline and Madam with different weight clipping thresholds. As can be seen, a reduced clipping threshold seems to both \textit{stabilise} and \textit{regularise} the test loss. The stabilising role of weight clipping may be understood as follows: at the end of training, a learning algorithm can overfit the training set by simply scaling up the learnt weights (amplifying confidence on its current predictions). Since Madam can increase the weights \textit{exponentially} (via compound growth over a number of iterations), this effect is particularly unstable and motivates explicit weight clipping. Still, needing to tune this clipping threshold is undesirable, and it would be better if there were a way to avoid it.

The second is the slight deterioration of Madam compared to Adam and SGD in Table \ref{tab:madam-results}. It is not immediately clear where this deterioration comes from. One possibility is that, since Madam freezes the sign pattern of weights from initialisation, the network is not expressive enough to represent the very best solutions. A workaround to this potential problem could involve finding a better way to initialise the sign pattern of the weights at the start of training. More generally, it is possible that we are missing some key idea that would close this performance gap.

\section*{Broader impact}

This paper proposes that multiplicative update rules may be better suited to the compositional structure of neural networks than additive update rules. It concludes by discussing possible implications of this idea for chip design and neuroscience. The authors believe the work to be fairly neutral in terms of propensity to cause positive or negative societal outcomes.
\section*{Acknowledgements and disclosure of funding}

The authors would like to thank the anonymous reviewers for their helpful comments.

JB was supported by an NVIDIA fellowship. The work was partly supported by funding from NASA.

\newpage
\section*{References}
\renewcommand{\bibsection}{}\vspace{0em}
\bibliographystyle{unsrtnat}
\bibliography{refs}

\newpage
\begin{appendices}
\section{Proofs}
\label{app:proof}

\ftc*
\begin{proof}
    By the fundamental theorem of calculus,
    $$\mathcal{L}(W+\Delta W) - \mathcal{L}(W) = \sum_{k=1}^L\left[ g_k(W)^T\Delta W_k + \int_0^1 \big[g_k(W+t\Delta W) - g_k(W)\big]^T\Delta W_k\idiff{t}\right].$$
    
    The result follows by replacing the first term on the righthand side by the cosine formula for the dot product, and bounding the second term via the integral estimation lemma.
\end{proof}
\drt*
\descentlemma*
\begin{proof}
    Using the gradient reliability estimate from deep relative trust, we obtain that:
    \begin{gather*}\max_{t\in[0,1]}\frac{\norm{g_k(W+t\Delta W)-g_k(W)}_F}{\norm{g_k(W)}_F} 
    \leq \max_{t\in[0,1]} \prod_{l=1}^L \left(1+\frac{\norm{t\Delta W_l}_F}{\norm{W_l}_F}\right)  - 1
    \leq \prod_{l=1}^L \left(1+\frac{\norm{\Delta W_l}_F}{\norm{W_l}_F}\right)  - 1.\end{gather*}
    
    Descent is guaranteed if the bracketed terms in Lemma \ref{lem:ftc} are positive. By the previous inequality, this will occur provided that:
    \begin{equation}\label{cond:descent}
    \prod_{l=1}^L \left(1+\frac{\norm{\Delta W_l}_F}{\norm{W_l}_F}\right)  < 1 + \cos\theta_k \qquad (\text{for all }k=1,...,L)
    \end{equation}
    where $\theta_k$ measures the angle between $\Delta W_k$ and $-g_k(W)$. For the update in Equation \ref{eq:mult},
    $$W+\Delta W = W \odot (1-\eta \sign W \odot\sign g(W)).$$
    Therefore the perturbation is given by $\Delta W = -\eta \,\abs{W} \odot\sign g(W)$. For this perturbation, $\norm{\Delta W_*}_F/\norm{W_*}_F = \eta$ for any possible subset of weights $W_*$. Also, letting $\measuredangle(\cdot,\cdot)$ return the angle between its arguments, $\theta_k$ and $\gamma_k$ are related by:
    \begin{align*}
        \theta_k &:= \measuredangle(\Delta W_k,-g_k(W)) \\
        &= \measuredangle(-\eta \,\abs{W_k} \odot\sign g_k(W),-g_k(W)) \\
        &= \measuredangle(\abs{W_k} \odot\sign g_k(W),g_k(W)) \\
        &= \measuredangle(\abs{W_k} ,\abs{g_k(W)})\\&=: \gamma_k.
    \end{align*}
    Substituting these two results back into Equation \ref{cond:descent} and rearranging, we are done.
\end{proof}
\begin{table}
\caption{Complete results for FP32 Madam. We quote top-1 error, FID \citep{ttur} and perplexity for the classifiers, GAN and transformer respectively. Lower is better in all cases. The mean result is quoted with a range based on three repeats.}\label{tab:madam-results-full}
    \centering
    \emph{Note: all Madam runs use initial learning rate $\mathit{\eta=0.01}$.\\
For all algorithms, $\mathit{\eta}$ is decayed by 10 when the loss plateaus.} 

\vspace{0.5em}

\resizebox{\textwidth}{!}{%
\begin{tabular}{cccccccccc}
    \toprule
    \textbf{Dataset} & 
    \begin{tabular}{@{}c@{}}\textbf{Adam} $\boldsymbol{\eta}$\end{tabular} & 
    \begin{tabular}{@{}c@{}}\textbf{Adam} \textbf{train}\end{tabular} & 
    \begin{tabular}{@{}c@{}}\textbf{Adam} \textbf{test}\end{tabular} & 
    \begin{tabular}{@{}c@{}}\textbf{SGD} $\boldsymbol{\eta}$\end{tabular} & 
    \begin{tabular}{@{}c@{}}\textbf{SGD} \textbf{train}\end{tabular} & 
    \begin{tabular}{@{}c@{}}\textbf{SGD} \textbf{test}\end{tabular} & 
    \begin{tabular}{@{}c@{}}\textbf{Madam} $\boldsymbol{\eta}$\end{tabular} & 
    \begin{tabular}{@{}c@{}}\textbf{Madam} \textbf{train}\end{tabular} & 
    \begin{tabular}{@{}c@{}}\textbf{Madam} \textbf{test}\end{tabular} \\
    \midrule
    CIFAR-10    & 0.001 & $0.01\pm0.01$ & $6.6\pm0.2$ &  0.1   & $0.01\pm0.01$ & $8.2\pm1.3$ & 0.01 & $0.01\pm0.01$ & $7.8\pm0.2$ \\
    CIFAR-100   &  0.001     & $0.02\pm0.01$ &   $29.8\pm0.4$  & 0.1 & $0.03\pm0.01$ & $29.1\pm0.2$ & 0.01 & $2.35\pm0.08$& $30.2\pm0.1$ \\ 
    ImageNet    &   0.01   & $19.8\pm0.2$  & $26.7\pm0.3$            & 0.1 & $17.7\pm0.1$ & $24.1\pm0.1$ & 0.01 & $25.4\pm0.1$ & $28.9\pm0.1$ \\
    cGAN        & 0.0001& $23.1\pm0.8$ & $23.9\pm0.9$&     0.01 & $34\pm1$ & $34\pm1$ & 0.01 & $19 \pm 2$ & $19\pm 2$ \\
    Wikitext-2  & 0.0001 & $109.8\pm0.5$ & $173.4\pm0.9$ & 1.0 & $149.9\pm0.2$ & $169.6\pm0.6$ & 0.01 & $126.9\pm0.2$ & $173.3\pm0.6$ \\
    \bottomrule
\end{tabular}
}
\end{table}

\section{Experimental details}
\label{app:exp}

\paragraph{FP32 Madam}

The initial learning rate was set to $\eta=0.01$ across all tasks, while the weight clipping threshold $\sigma^*$ was tuned over the range 1 to 5. The learning rate schedule was set specific to each task, although the general strategy was to decay the learning rate upon plateau of the loss.

\paragraph{$\bm{B}$-bit Madam} For 12-bit Madam, the base precision $\eta_0$ was set to 0.001 across all tasks. To reduce precision to 10 bits and 8 bits, we maintained the same dynamic range as for 12 bits by increasing the base precision $\eta_0$. For example, in all the image classification experiments, the initial learning rate $\eta$ was set to 0.016 for all bit widths, while the learning rate was decayed to 0.001 in the 12-bit experiments, 0.004 in the 10-bit experiments and never decayed in the 8-bit experiments. In the GAN and transformer experiments, the initial learning rate $\eta$ was set to 0.01. The difference between an initial learning rate of $0.016$ and $0.01$ was not highly significant---the learning rate of 0.016 was chosen to be compatible with code that decayed the learning rate by powers of two. The weight clipping threshold $\sigma^*$ was set to the value used in FP32 Madam for each task.

\paragraph{CIFAR-10 classification}
We evaluated the different optimisers on the CIFAR-10 dataset \cite{cifar} using a Resnet-18 model \cite{resnet}. CIFAR-10 consists of 60,000 images in 10 different classes. For simplicity, we switched off the learnable affine parameters in batch norm. The network was trained for 300 epochs, and we used a fixed learning rate decay schedule that decayed every 100 epochs.

\paragraph{CIFAR-100 classification}
The CIFAR-100 dataset is similar to CIFAR-10, but contains 100 classes containing 600 images each \cite{cifar}. Again, we used a Resnet-18 model with the batch norm affine parameters switched off. The other hyperparameters were chosen in the same way as for CIFAR-10, including the epoch budget and learning rate decay strategy.

\paragraph{ImageNet classification}
The ILSVRC2012 ImageNet dataset consists of 1.2 million images belonging to 1,000 classes \cite{deng2009imagenet}. We used a Resnet-50 network architecture to benchmark the optimisers \cite{resnet}.
Again, we switched off the affine parameters in batch norm for simplicity. The network was trained for 90 epochs. We applied a learning rate warm-up for the first 5 epochs of training, and decayed the learning rate after every 30 epochs.

\paragraph{CIFAR-10 GAN}
We trained a class-conditional generative adversarial network (cGAN) using a custom implementation of the BigGAN architecture \cite{brock2018large}, to learn to generate the CIFAR-10 dataset \cite{cifar}. The same learning rate was used in both the generator and discriminator. The networks were trained for 120 epochs. The learning rate was decayed after 100 epochs of training, by a factor of 10 in the FP32 experiments and to the base precision in the $B$-bit experiments.
Bias parameters were initialised from a Normal distribution with mean zero and variance 0.01.

\paragraph{Wikitext-2 transformer}
WikiText-2 is a language modeling dataset that contains over 100 million tokens extracted from Wikipedia \cite{wikitext}. We trained a transformer network architecture \cite{transformer} that was smaller than the transformers that are typically used for this task, explaining the general degradation of results compared to the state of the art for this dataset. The network was trained for 20 epochs with a single learning rate decay at epoch 10. In 12-bit Madam, decaying the learning rate to 0.005 rather than to the base precision of 0.001 slightly improved the results.
\end{appendices}

\end{document}